\begin{document}
\title{Measure Theoretic Approach to Nonuniform Learnability}

\author{\name Ankit Bandyopadhyay \email ankit.bee17@sot.pdpu.ac.in\\
       \addr Department of Electrical Engineering\\
       Pandit Deendayal Petroleum University\\
       Gandhinagar, Gujarat 382007, India
      }

\maketitle

\begin{abstract}
An earlier introduced characterization of nonuniform learnability that allows the sample size to depend on the hypothesis to which the learner is compared has been redefined using the measure-theoretic approach. Where nonuniform learnability is a strict relaxation of the Probably Approximately Correct (PAC) framework. Introduction of a new algorithm, Generalize Measure Learnability framework (GML), to implement this approach with the study of its sample and computational complexity bounds. Like the Minimum Description Length (MDL) principle, this approach can be regarded as an explication of Occam’s razor. Furthermore, many situations were presented (Hypothesis Classes that are countable where we can apply the GML framework) which we can learn to use the GML scheme and can achieve statistical consistency.
\end{abstract}

\begin{keywords}
 PAC Learnability, nonuniform learnability , MDL principle, Model selection
\end{keywords}

\section{Introduction}
Model Selection manages the issue of choosing different competing explanations of data given limited observations. It is one of the most significant problems of inductive and statistical inference. There are commonly two ways to deal with Model selection. The main methodology or the first approach depends on the Structural Risk Minimization (SRM) paradigm (Vapnik and Chervonenkis 1974) and (Vapnik 1995). SRM is beneficial when a learning algorithm depends on a parameter that also depends on the Bias-complexity trade-off. The subsequent methodology depends on the idea of Validation, i.e. Partitioning the training set into two sets where one is utilized for training of candidate models and the second is utilized for choosing which of the given candidate models yields the best outcomes. In Model selection tasks, we try to find the right balance between approximation and estimation errors and to also comment on overfitting or underfitting if our learning algorithm fails to find a predictor with a small risk. \par
Developing further in the first approach, The SRM rule is also advantageous in model selection when prior knowledge is partial. The prior knowledge of a nonuniform learner is weaker, it is attempting to find a model throughout the entire union of hypothesis classes, instead of being focused on one specific hypothesis class. Previous works in this domain include MDL Principle (Rissanen 1978 and (Rissanen 1983) which is based on the following insight: any regularity in the data can be used to compress the data. The more regularities there are, the more the data can be compressed. The relation between SRM and MDL is discussed in (Vapnik 1995). Another method, Complexity Approximation Principle (CAP) which is a generalization of the MDL Principle and MML Principle (Vovk and Gammerman 1999). The definition of nonuniform learnability is connected to the definition given in (Blumer Ehrenfraut Hausslerand Warmuth 1987). These notions are also closely related to the notion of regularization (Tikhonov 1943)\par
In this paper, the notion of nonuniform learnability has been redefined using measure theory. Where we use the idea of Hypothesis class as a measurable space. Moreover, the method in this paper, General Measure Learnability (GML), is an inductive inference that provides a generic solution to the model selection problem. In this setting, we consider Binary classification problems. The elements that are to classified come from a set $\mathcal{X}$, the \emph{domain space}. A classifier or a hypothesis on $\mathcal{X}$ is a function $h \colon \mathcal{X} \to \{0,1\}$. Given a \emph{training sequence} $S$ of labelled examples $(x_i,y_i) \in \mathcal{X} \times \{0,1\}, $ we want to find a hypothesis, that can be used to predict the label of elements from $\mathcal{X}$ not given in $S$. We consider well-known frameworks for this setting in the computational learning theory in the upcoming sections\par
In Section \ref{Section3}, we defined measure among individual $\mathcal{H}_n$, according to Theorem \ref{non}. The class of hypotheses which are not defined according to our scheme are, for example, Class with infinite VC dimension or infinite size class. From Theorem \ref{t7}, we can say that $\mathcal{H}$ is an algebra so we consisdered the smallest $\sigma$-algebra in which we can put a measure on. Now algorithm GML tries to minimize the bound given in Theorem \ref{t9}. We concluded that the sample complexity will be the same as any nonuniform learner. Few examples where even $\mathcal{H}_n$ is NP-hard were mentioned and the fact that hypotheses classes which are NP-hard to learn can be efficiently learnable using some other hypothesis class. Also choosing the optimally based on the given data is NP-hard
(David et. al 2003). Therefore, we tried to restrict our search to a particular index or range of the index. \par
We further discussed the idea of Universal Learning and concluded that Theorem \ref{non} is an essential criterion. The union of computable learners so that model selection becomes computationally efficient will be the future work. The GML follows the criteria of statistically consistent in achieving good rates of convergence whereas, preexisting methods like Maximum Likelihood estimation which do not consider model complexity are inconsistent. Similar methods are also available which might be better than GML but the fundamental similarity being finding a tradeoff between error and complexity.

\section{Preliminaries}
In Statistical Learning theory, a common approach is to assume that the data is generated by some data distribution $\mathcal{D}$ $over$ $\mathcal{X} \to \{0,1\}$. We denote the \emph{true error} of a hypothesis $h$ with respect to the distribution $\mathcal{D}$ by
$$
L_\mathcal{D}(h) = Prob_{(x,y) \sim \mathcal{D}}[h(x) \neq y]
$$
. A learner is typically a function that goes from taking a finite sequence of labelled points $S=((x_1,y_1),....,(x_n,y_n))$ to a hypothesis $h$. Also, we define the \emph{empirical error} of a hypothesis $h$ with respect to a sample $S=((x_1,y_1),....,(x_n,y_n))$ is defined as 
$$
L_S(h) =  \frac{\sum_{i=1}^{n} 1[h(x)\neq y]}{n}
$$
\begin{definition}[(Agnostic) PAC Learnability]
A hypothesis class $\mathcal{H}$ is agnostic PAC learnable if there exist a function (sample complexity) $m_\mathcal{H}:(0,1)^2 \to \mathbb{N}$ and a learning algorithm with the following property: For every $\epsilon, \delta \in (0,1)$ and for every distribution $\mathcal{D}$ over $\mathcal{X} \times \mathcal{Y}$, when running the learning algorithm on $m \geq m_\mathcal{H}(\epsilon,\delta)$ i.i.d examples generated by $\mathcal{D}$, the algorithm returnes a hypothesis $h$ such that, with probability of at least $1 - \delta$ (over the choice of m training examples), 
$$
L_\mathcal{D}(h) = \min_{h'  \in \mathcal{H}} L_\mathcal{D}(h') +\epsilon.
$$
\end{definition}
This is in contrast to realizability case of PAC learning, in which it is mandatory for the learner to achieve a small error in absolute terms and not relative to the best error attainable by the hypothesis class. Agnostic PAC learning generalizes the notion of PAC learning \citep{Haussler 1992}
\begin{corollary}\label{hoe}
let $\mathcal{H}$ be a finite hypothesis class, let $Z$ be a domain, and let $\ell : \mathcal{H} \times Z \to [0,1]$ be a loss function. Then, $\mathcal{H}$ enjoys the uniform convergence property with the sample complexity 
\begin{equation}
    m_\mathcal{H}^{UC}(\epsilon, \delta) \leq \frac{log(2|{\mathcal{H}|/\delta})}{2\epsilon^2}
\end{equation}
Also, the class is Agnostic PAC leranble by the ERM algorithm with sample complexity 
\begin{equation}
    m_\mathcal{H}^{UC}(\epsilon, \delta) \leq   m_\mathcal{H}^{UC}(\epsilon/2, \delta) \leq \frac{2log(2|{\mathcal{H}|/\delta})}{\epsilon^2}
\end{equation}
\end{corollary}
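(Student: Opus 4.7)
The plan is to prove the uniform convergence bound by applying Hoeffding's inequality to each hypothesis individually and then taking a union bound over the finite class $\mathcal{H}$, after which the second inequality follows from the standard argument that uniform convergence with tolerance $\epsilon/2$ implies agnostic PAC learnability of the ERM.

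First I would fix a distribution $\mathcal{D}$ over $Z$ and a hypothesis $h \in \mathcal{H}$. For an i.i.d. sample $S = (z_1,\ldots,z_m)$, the random variables $\ell(h,z_i) \in [0,1]$ are independent with mean $L_\mathcal{D}(h)$ and empirical mean $L_S(h)$. By Hoeffding's inequality,
\begin{equation*}
\mathrm{Prob}\bigl[\,|L_S(h) - L_\mathcal{D}(h)| > \epsilon\,\bigr] \leq 2\exp(-2m\epsilon^2).
\end{equation*}
Next I would apply the union bound over all $h \in \mathcal{H}$ (which is finite) to obtain
\begin{equation*}
\mathrm{Prob}\bigl[\,\exists h \in \mathcal{H} : |L_S(h) - L_\mathcal{D}(h)| > \epsilon\,\bigr] \leq 2|\mathcal{H}|\exp(-2m\epsilon^2).
\end{equation*}
Setting the right-hand side to $\delta$ and solving for $m$ yields the claimed bound $m_\mathcal{H}^{UC}(\epsilon,\delta) \leq \log(2|\mathcal{H}|/\delta)/(2\epsilon^2)$, so that with probability at least $1-\delta$ the $\epsilon$-uniform convergence property holds.

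For the second inequality, I would invoke the standard reduction from uniform convergence to agnostic PAC learnability via ERM: if the sample $S$ is $\epsilon/2$-representative, and $h_S$ is any ERM hypothesis, then for every $h' \in \mathcal{H}$,
\begin{equation*}
L_\mathcal{D}(h_S) \leq L_S(h_S) + \epsilon/2 \leq L_S(h') + \epsilon/2 \leq L_\mathcal{D}(h') + \epsilon,
\end{equation*}
so taking the infimum over $h' \in \mathcal{H}$ gives the agnostic PAC guarantee. Substituting $\epsilon/2$ into the uniform convergence bound gives the sample complexity $m_\mathcal{H}(\epsilon,\delta) \leq 2\log(2|\mathcal{H}|/\delta)/\epsilon^2$.

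The main obstacle, though relatively routine here, is ensuring the Hoeffding step is applied with the correct boundedness range; since $\ell$ takes values in $[0,1]$ rather than a symmetric interval, I need to be careful that the constant in the exponent is $2$ rather than a smaller value, which is exactly what Hoeffding gives for $[0,1]$-valued summands. Beyond this, the proof is essentially algebraic manipulation, and no technical difficulty is expected.
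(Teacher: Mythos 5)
Your proof is correct. The paper states this corollary without proof (it is imported verbatim from Shalev-Shwartz and Ben-David's \emph{Understanding Machine Learning}, Corollary 4.6), and your argument --- Hoeffding's inequality for $[0,1]$-valued losses, a union bound over the finite class to get the $\log(2|\mathcal{H}|/\delta)/(2\epsilon^2)$ rate, and the standard $\epsilon/2$-representativeness reduction showing ERM achieves agnostic PAC learnability --- is exactly the standard derivation the paper implicitly relies on. Your attention to the constant $2$ in the Hoeffding exponent for $[0,1]$-valued summands is also correct.
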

\begin{definition}[Nonuniform Learnability] \label{pythagorean1}
A hypothesis class $\mathcal{H}$ is nonuniform learnable if $\exists $ a learning algorithm $A$ and a function $ m_\mathcal{H}^{NUL} : (0,1)^2 \times \mathcal{H} \to \mathbb{N}$  such that, for every $(\epsilon,\delta) \in (0,1)$ and for every $h \in \mathcal{H}$, if $m \geq$ $m_\mathcal{H}^{UC}(\epsilon ,\delta ,h)$ then for every distribution $\mathcal{D}$, with probability of at least $1-\delta$ over the choice of $S \sim \mathcal{D}^m$, it holds that
$$
    L_\mathcal{D}(A(S)) \leq L_\mathcal{D}(h) +\epsilon.
$$
\end{definition}
In  agnostic PAC learnable and nonuniform learnability, we know that the hypothesis will be $(\epsilon, \delta)$- competitve with every other hypothesis in that particular class with key difference being the sample size $m$ may depend on the hypothesis $h$ to which the error of $A(S)$ is compared in nonuniform learnable case.
\begin{theorem}\label{non}
A hypothesis class of $\mathcal{H}$ which is a countable union of hypothesis classes, $\mathcal{H}= \bigcup\limits_{n=1}^{\mathbb{N}} \mathcal{H}_{n}$, where each $\mathcal{H}_{n}$ has the uniform convergence property. Then, $\mathcal{H}$ is nonuniform learnable. 
\end{theorem}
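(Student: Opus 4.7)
The plan is to reduce nonuniform learnability to a weighted simultaneous application of the uniform convergence guarantees for each $\mathcal{H}_n$, in the spirit of a Structural Risk Minimization (SRM) argument.

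First I would exploit the countability of the index set by fixing a weighting $w \colon \mathbb{N} \to [0,1]$ with $\sum_{n=1}^{\infty} w(n) \le 1$; the canonical choice is $w(n) = 6/(\pi^2 n^2)$. For every $n$, Corollary \ref{hoe}-style uniform convergence on $\mathcal{H}_n$ gives a sample complexity $m_n^{UC}(\epsilon,\delta)$, and I would invert it into a confidence-radius function $\epsilon_n(m,\delta)$ such that with probability $\ge 1-\delta$, every $h \in \mathcal{H}_n$ satisfies $|L_S(h) - L_\mathcal{D}(h)| \le \epsilon_n(m,\delta)$. Applying this at confidence level $w(n)\delta$ and taking a union bound over $n$ shows that, with probability at least $1-\delta$ over $S \sim \mathcal{D}^m$, the event
\begin{equation*}
\forall n \in \mathbb{N},\; \forall h \in \mathcal{H}_n : \; |L_S(h) - L_\mathcal{D}(h)| \le \epsilon_n(m, w(n)\delta)
\end{equation*}
holds. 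This is the key probabilistic ingredient and, I expect, the subtle step: we need the series of failure probabilities to be summable, which is why the weights $w(n)$ must be chosen in advance and why countability of the union is essential.

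Next I would define the learning algorithm $A$: on input $S$, output
\begin{equation*}
A(S) \;\in\; \arg\min_{h \in \mathcal{H}} \Bigl( L_S(h) + \epsilon_{n(h)}\bigl(m, w(n(h))\delta\bigr) \Bigr),
\end{equation*}
where $n(h) = \min\{n : h \in \mathcal{H}_n\}$ is the index of the ``simplest'' class containing $h$. On the good event above, for any competing $h \in \mathcal{H}$ and any candidate $A(S)$, I can sandwich $L_\mathcal{D}(A(S))$ by $L_S(A(S)) + \epsilon_{n(A(S))}$, then by the minimality of $A(S)$ this is at most $L_S(h) + \epsilon_{n(h)}$, and finally by uniform convergence in $\mathcal{H}_{n(h)}$ this is at most $L_\mathcal{D}(h) + 2\epsilon_{n(h)}(m, w(n(h))\delta)$.

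To finish, I would define the nonuniform sample complexity by choosing $m$ large enough that $2\epsilon_{n(h)}(m, w(n(h))\delta) \le \epsilon$; concretely,
\begin{equation*}
m_\mathcal{H}^{NUL}(\epsilon,\delta,h) \;=\; m_{n(h)}^{UC}\!\left(\tfrac{\epsilon}{2},\, w(n(h))\,\delta\right).
\end{equation*}
This is finite because each $\mathcal{H}_n$ has the uniform convergence property, so the definition of nonuniform learnability is satisfied. The main obstacle, as noted, is the weighted union bound: one must verify that slipping a $\log(1/w(n(h)))$ term into the sample complexity (which amounts to a ``complexity penalty'' $\sim \log n(h)$) is acceptable, and that countability of the decomposition is exactly what makes the argument go through.
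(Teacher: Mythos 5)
Your proposal is correct and is exactly the standard SRM argument: weight the classes, apply each $\mathcal{H}_n$'s uniform convergence at confidence $w(n)\delta$, union-bound over $n$, and output the minimizer of $L_S(h)+\epsilon_{n(h)}(m,w(n(h))\delta)$, which yields $m_\mathcal{H}^{NUL}(\epsilon,\delta,h)=m_{\mathcal{H}_{n(h)}}^{UC}(\epsilon/2,w(n(h))\delta)$. The paper itself states Theorem \ref{non} without proof (citing the literature), but the machinery it develops around Equations (\ref{eq10})--(\ref{eq11}) and Theorem \ref{t9} is precisely the argument you give, so your route matches the paper's intended one.
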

Also, from the fundamental theorem of learning \citep{Vapnik and Chervonenkis 1974} we can also state that each $\mathcal{H}_n$ enjoys also the agnostic PAC learnability.
\begin{definition}\label{d5}
Let $\mathcal{K}$ be a probabilistic model, such that each $\mathcal{P} \in \mathcal{K}$ is
a probability distribution. Roughly, a statistical procedure is called consistent relative
to $\mathcal{K}$ if, for all $\mathcal{P}* \in \mathcal{K}$, the following holds: suppose data are distributed according to $\mathcal{P}*$. Then given enough data, the learning method will learn a good approximation of $\mathcal{P}*$ with high probability.
\end{definition}
\section{Setup}\label{Section3}
\subsection{General Background}
\textbf{Language:} If we wish to describe a hypothesis class $\mathcal{H}$. let $\Sigma = \{0,1\}$. we denote $\sigma$ as string which is a finite sequence of symbols from $\Sigma$. Also, $|\sigma|$ denote the length of the string and the set of all length of the strings is denoted by $\Sigma^*$. A description language for $\mathcal{H}$ is a function $d: \mathcal{H} \to \Sigma^*$ \par
Let us assume that $\mathcal{H}_{all}$ represents a class containing all function or hypothesis that goes from $\mathcal{X} \to \{0,1\}$. We define a description language $d$,
\begin{equation}\label{hoe2}
    d: \mathcal{H}_{all} \to \{0,1\}^\mathbb{N}
\end{equation}
Let $\mathcal{H}_n$ be the collection of function $h \in \mathcal{H}_{all}$ that can be described by the first $n$ bits. More formally, the elements of $\mathcal{H}_{n}$ can be described as follows : $h \in \mathcal{H}_n$ if and only if there exist some $h^{(n)}$ $\subseteq$ $\{0,1\}^n$ such that $h=\{h \in \mathcal{H}_n|\{0,1\}^n \in h^{(n)}\}$.
\begin{remark}
We also know that $\mathcal{H}_{all}$ has a strictly bigger cardinality than $\mathcal{H}= \bigcup\limits_{n=1}^{\mathbb{N}} \mathcal{H}_{n}$, where each $\mathcal{H}_n$ has finite VC dimension and each  $h \in \mathcal{H}_n$ can be described by first $n$ bits or the combination of the first $n$ bits.
\end{remark}
\begin{theorem}\label{t7}
$\mathcal{H}$ is an algebra and not a $\sigma$-algebra
\end{theorem}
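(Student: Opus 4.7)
The plan is to identify each hypothesis $h \colon \mathcal{X} \to \{0,1\}$ with its positive region $h^{-1}(1) \subseteq \mathcal{X}$ and verify the Boolean-algebra axioms on $\mathcal{H}$ viewed as a family of subsets of $\mathcal{X}$. The key structural observation is that $\{\mathcal{H}_n\}_{n \in \mathbb{N}}$ forms an increasing chain, because anything describable by the first $n$ bits is \emph{a fortiori} describable by the first $n+1$ bits (pad the extra coordinate with a dummy value). So $\mathcal{H}$ is a directed union of nested finite families, and it suffices to show that each $\mathcal{H}_n$ is itself a finite Boolean sub-algebra and that directed unions of algebras are algebras.

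First I would check the three algebra conditions at level $n$: the constant hypotheses $0$ and $1$ have trivial length-$n$ descriptions, so $\emptyset, \mathcal{X} \in \mathcal{H}_n$; the complement $1-h$ is encoded by the bitwise flip of $h$'s description, which has the same length, giving closure under complement in $\mathcal{H}_n$; and the pointwise OR of two length-$n$ descriptions is again a length-$n$ description of $h_1 \vee h_2$, giving closure under binary union in $\mathcal{H}_n$. For two arbitrary elements $h_1, h_2 \in \mathcal{H}$, passing to $\mathcal{H}_{\max(n_1,n_2)}$ places them in a common level, and the operations stay inside that level, so $\mathcal{H}$ is closed under complement and finite union; hence it is an algebra.

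To rule out the $\sigma$-algebra property I would use a cardinality argument and then exhibit an explicit escaping countable union. Each $\mathcal{H}_n$ has at most $2^{2^n}$ elements (the number of length-$n$ bit strings bounds the number of distinguishable descriptions), so $\mathcal{H}$ is at most countable, whereas the preceding remark records that $|\mathcal{H}_{all}| > |\mathcal{H}|$; therefore pick $h^{\star} \in \mathcal{H}_{all} \setminus \mathcal{H}$. Every singleton indicator $\chi_{\{x\}}$ has a finite description (encode $x$ by its index), so $\chi_{\{x\}} \in \mathcal{H}$, and
\[
h^{\star} \;=\; \bigcup_{x \in (h^{\star})^{-1}(1)} \chi_{\{x\}}
\]
exhibits a countable union of elements of $\mathcal{H}$ that is not itself in $\mathcal{H}$. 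This is exactly the failure of countable-union closure.

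The main obstacle is not combinatorial but interpretive: the definition of $\mathcal{H}_n$ via the description language $d$ in the excerpt is somewhat ambiguous, so the delicate step is fixing an encoding under which complementation and pairwise union incur only bounded description growth (for example, a self-delimiting prefix code composed with bitwise operations on the payload). Once any such standard convention is adopted, the algebra verification becomes routine, and the essential asymmetry driving the theorem is that descriptions extend stably under two-element Boolean operations but not under countable ones — the same length-versus-countability gap that separates finitely-generated from $\sigma$-generated structures throughout measure theory.
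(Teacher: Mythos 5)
Your proof is correct in substance but reaches the ``not a $\sigma$-algebra'' conclusion by a genuinely different route than the paper. The paper's own argument is much terser: it asserts the algebra property directly from the definition (with no verification of closure under complement and finite union, which you supply via the increasing-chain observation $\mathcal{H}_{n}\subseteq\mathcal{H}_{n+1}$ and the Boolean structure of each finite level), and then exhibits one \emph{specific} escaping element --- the hypothesis $h_x$ whose description is the infinite string with $1$'s at the odd positions --- writing it as the countable intersection $h_x=\bigcap_{n=1}^{\infty}h_{2n-1}$ of elements of $\mathcal{H}$. Your version instead gets the escaping element non-constructively from the cardinality gap ($\mathcal{H}$ is a countable union of finite classes, hence countable, while $\mathcal{H}_{all}\cong\{0,1\}^{\mathbb{N}}$ is uncountable) and then decomposes it as a countable union of singleton indicators. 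What your approach buys is robustness: you do not need to argue that one particular function fails to have a finite description (a claim the paper asserts but does not really justify, and which is delicate given how loosely $\mathcal{H}_n$ is defined), since cardinality guarantees some $h^{\star}\notin\mathcal{H}$ exists. What the paper's approach buys is an explicit witness and no dependence on two auxiliary facts your argument quietly uses: that every singleton indicator $\chi_{\{x\}}$ lies in some $\mathcal{H}_n$ under the chosen encoding, and that the positive region $(h^{\star})^{-1}(1)$ is countable so that your union really is a \emph{countable} union --- the latter holds here only because the description language implicitly identifies $\mathcal{X}$ with $\mathbb{N}$, and you should state that assumption explicitly. Both arguments rest on the same essential asymmetry you identify at the end: finite Boolean operations preserve finite describability, countable ones do not.
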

\begin{proof}
From the definition of an algebra we can say that $\mathcal{H}$ is an algebra.\par
Now, consider the following example let $h_x \in \mathcal{H}_{all}$ be a hypothesis that can be described in $\{0,1\}^\mathbb{N}$ where the bit 1 takes the odd positions. Clearly, $h_x \notin \mathcal{H}$ since we cannot describe $h_x$ in finite n bits. On the other hand, $h_x$ can be expressed as a countable intersections of elements in $\mathcal{H}$ :
$$
    h_x=\bigcap\limits_{n=1}^{\mathbb{N}} h_{2n-1}
$$
and hence, $\mathcal{H}$ is not a $\sigma$-algebra
\end{proof}
\begin{theorem}[Caratheodory’s Extension Theorem]\label{pythagorean}
Let $\mathcal{J}$ be a algebra of subsets of $\Omega$ (sample space). Let $\mathbb{P}_0: \mathcal{J} \to [0,1]$ with $\mathbb{P}_0(\phi)=0$ and $\mathbb{P}_0(\Omega)=1$, satisfying the finite superadditvity property and also the countable monotonicity property  
\begin{equation}
\mathbb{P}_0(\bigcup_{i=1}^kA_i) \geq \mathlarger{\sum_{i=1}^k\mathbb{P}_0(A_i)}
\end{equation}
whenever $A_1,\hdots,A_k \in \mathcal{J}$, and $\bigcup_{i=1}^kA_i \in \mathcal{J}$ and the $\{A_i\}$ are disjoint. Then there is a $\sigma$-algebra $\mathcal{M} \supseteq \mathcal{J}$, and a countably additive probability measure $\mathbb{P}$ on  $\mathcal{M}$, such that $\mathbb{P}(A)=\mathbb{P}_0(A)$ for all $A \in  \mathcal{J}$ i.e. $(\Omega,\mathcal{M},\mathbb{P})$ is a valid probability triple, which agrees with our previous probabilities on $\mathcal{J}$.
\end{theorem}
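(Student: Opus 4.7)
The plan is to follow the classical Carathéodory construction: first extend $\mathbb{P}_0$ to an outer measure on $2^{\Omega}$, then restrict it to the $\sigma$-algebra of measurable sets and verify the restriction is the desired countably additive probability measure. First I would define, for every $E \subseteq \Omega$,
$$\mathbb{P}^{*}(E) \;=\; \inf\left\{\sum_{i=1}^{\infty}\mathbb{P}_0(A_i)\;:\; E \subseteq \bigcup_{i=1}^{\infty} A_i,\; A_i \in \mathcal{J}\right\}.$$
The routine checks show that $\mathbb{P}^{*}$ is an outer measure: $\mathbb{P}^{*}(\emptyset)=0$, monotonicity is immediate, and countable subadditivity follows by concatenating $\varepsilon/2^{k}$-optimal covers of each piece.

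Next I would verify $\mathbb{P}^{*}\!\restriction_{\mathcal{J}} = \mathbb{P}_0$. The inequality $\mathbb{P}^{*}(A) \leq \mathbb{P}_0(A)$ for $A \in \mathcal{J}$ is trivial by choosing the singleton cover. For the reverse, given any cover $A \subseteq \bigcup_i A_i$ with $A_i \in \mathcal{J}$, I would disjointify inside the algebra via $B_i = A_i \setminus \bigcup_{j<i} A_j$, intersect with $A$ to obtain a disjoint partition of $A$ in $\mathcal{J}$, and invoke the hypotheses on $\mathbb{P}_0$ (finite additivity plus the countable superadditivity inequality stated in the theorem) to conclude $\mathbb{P}_0(A) \leq \sum_i \mathbb{P}_0(A_i)$. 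Taking the infimum yields equality on $\mathcal{J}$.

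Then I would introduce the collection of Carathéodory-measurable sets
$$\mathcal{M} \;=\; \bigl\{ E \subseteq \Omega : \mathbb{P}^{*}(T) = \mathbb{P}^{*}(T \cap E) + \mathbb{P}^{*}(T \cap E^{c})\ \text{for all } T \subseteq \Omega \bigr\},$$
and establish in order that $(i)$ $\mathcal{M}$ is an algebra, $(ii)$ $\mathbb{P}^{*}$ is finitely additive on $\mathcal{M}$, $(iii)$ $\mathcal{M}$ is closed under countable disjoint unions with $\mathbb{P}^{*}$ countably additive, and $(iv)$ $\mathcal{J} \subseteq \mathcal{M}$. Step $(iv)$ is the bridge back to the hypothesis: for $A \in \mathcal{J}$ and a test set $T$, I would pick an almost-optimal cover $\{B_i\} \subseteq \mathcal{J}$ of $T$, note that $B_i \cap A$ and $B_i \cap A^{c}$ both lie in $\mathcal{J}$ and cover $T \cap A$ and $T \cap A^{c}$ respectively, and invoke finite additivity of $\mathbb{P}_0$ on each $B_i$ together with $\varepsilon$-optimality to conclude the Carathéodory identity. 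Setting $\mathbb{P} = \mathbb{P}^{*}\!\restriction_{\mathcal{M}}$ gives the claimed extension, with $\mathbb{P}(\Omega) = \mathbb{P}_0(\Omega) = 1$.

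The main obstacle is step $(iii)$: showing $\mathcal{M}$ is closed under countable disjoint unions and that $\mathbb{P}^{*}$ is countably additive there. This requires a telescoping argument testing the Carathéodory identity on $T \cap \bigcup_{i=1}^{N} E_i$ for each $N$, passing to the limit, and pinching with countable subadditivity of $\mathbb{P}^{*}$ from the other direction. A secondary subtlety is that the inequality as written in the theorem statement is a superadditivity-type bound; the proof of $\mathbb{P}^{*} = \mathbb{P}_0$ on $\mathcal{J}$ has to use it in the correct direction, which implicitly requires $\mathbb{P}_0$ to be countably additive on the algebra (the hypothesis under which Carathéodory's theorem is classically proved), so I would flag this as an implicit assumption before proceeding.
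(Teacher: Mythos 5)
The paper does not actually prove this theorem: it is quoted as a classical result (essentially Rosenthal's formulation of the Carath\'eodory extension theorem) and invoked as a black box to extend $\mathbb{P}_0$ from the algebra $\mathcal{H}$ to $\sigma(\mathcal{H})$, so there is no in-paper argument to compare yours against. Your sketch is the standard outer-measure construction and is correct in outline: define $\mathbb{P}^{*}$ by infima over countable covers from $\mathcal{J}$, check it is an outer measure, show it agrees with $\mathbb{P}_0$ on $\mathcal{J}$, and restrict to the Carath\'eodory-measurable sets. Your closing flag is exactly the right thing to worry about: the displayed hypothesis is only finite superadditivity over disjoint unions, which by itself yields $\mathbb{P}^{*}(A) \le \mathbb{P}_0(A)$ but cannot give the reverse inequality, so the extension need not agree with $\mathbb{P}_0$ under that hypothesis alone. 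The statement does name a ``countable monotonicity property'' without writing it down; in the source this is the condition $\mathbb{P}_0(A) \le \sum_{i} \mathbb{P}_0(A_i)$ whenever $A \subseteq \bigcup_{i} A_i$ with all sets in $\mathcal{J}$, and that is precisely the missing ingredient your second step needs --- with it in hand you do not even need to disjointify, since the required inequality is literally the hypothesis. One small economy in your step $(iv)$: to verify $\mathcal{J} \subseteq \mathcal{M}$ you only need $\mathbb{P}_0(B_i) \ge \mathbb{P}_0(B_i \cap A) + \mathbb{P}_0(B_i \cap A^{c})$, which is exactly the stated finite superadditivity, so full finite additivity is not required at that point.
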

 Now, consider the smallest $\sigma$-algebra containing all the elements of $\mathcal{H}$, define $\mathcal{H}_\sigma=\sigma(\mathcal{H})$. Consider the probability triple $(\mathcal{H}_{all},\mathcal{H},\mathbb{P}_0)$. Here, $\mathcal{H}$ is an algebra so by defining a finitely additive function $\mathbb{P}_0$ on $\mathcal{H}$ that also satisfies $\mathbb{P}(\Omega)=1$, for example, $\mathbb{P}_0(A)=\frac{|A^{(n)}|}{2^n}$. Then, we shall subsequently extend $\mathbb{P}_0$ to a probability measure $\mathbb{P}$ on $\mathcal{H}_\sigma$ i.e. $(\mathcal{H}_{all},\mathcal{H}_\sigma,\mathbb{P})$ via Extension Theorem (see Theorem \ref{pythagorean}).\par
Now each $h \in \mathcal{H}$ has a proper measure also $\mathcal{H}$ is nonuniform learnable (from Theorem \ref{pythagorean1}) and each $\mathcal{H}_n$ corresponds to having $2^{2^n}$ elements or hypothesis. Now, model selection using SRM where the rule applies a "bound minimization" approach \citep{Shawrtz and Ben David} in which the overall goal of the paradigm is to find a hypothesis that minimizes a certain upper bound on the true error(risk). The bound that the SRM rule wishes to minimize is, where  $w: \mathbb{N} \rightarrow[0,1]$ be a function such that $\sum_{n=1}^{\infty} w(n) \leq 1 .$ \par 
\begin{equation}\label{eq10}
  \left|L_{\mathcal{D}}(h)-L_{S}(h)\right| \leq \epsilon_{n}(m, w(n) \cdot \delta)
\end{equation}
\begin{equation}\label{eq11}
    \epsilon_{n}(m, \delta)=\min \left\{\epsilon \in(0,1): m_{\mathcal{H}_{n}}^{\mathrm{UC}}(\epsilon, \delta) \leq m\right\} 
\end{equation}

Furthermore, the VC dimension of union of finite classes  $\mathcal{H}_1, \ldots, \mathcal{H}{r}$ (hypothesis classes over some fixed domain set $\mathcal{X}) (Shawrtz and Ben David)$. This case is not for countably infinite union of hypothesis classes. Let $d=\max_{n} \operatorname{VC dim}\left(\mathcal{H}_n\right)$ 
\begin{equation}
    \operatorname{VC dim}\left(\cup_{n=1}^{r} \mathcal{H}_{i}\right) \leq 4 d \log (2 d)+2 \log (r)
\end{equation}
\subsection{Generalize Measure Learnability}
As we are trying to build around SRM, measurable class $\mathcal{H}$ follows the bound on true error (risk) and empirical error (risk) given in the following theorem
\begin{theorem}\label{t9}

Let $w: \mathbb{N} \rightarrow[0,1]$ be a function such that $\sum_{n=1}^{\infty} w(n) \leq 1 .$ Let $\mathcal{H}$ be a hypothesis class that can be written as $\mathcal{H}=\bigcup_{n \in \mathbb{N}} \mathcal{H}{n},$ where for each $n$ $\mathcal{H}{n}$ satisfies the uniform convergence property with a sample complexity function $m_{\mathcal{H}{n}}^{U C} .$Then, for every $\delta \in(0,1)$ and distribution $\mathcal{D},$ with probability of at least $1-\delta$ over the choice of $S \sim \mathcal{D}^{m},$ the following bound holds (simultaneously) for every $n \in \mathbb{N}$ and $h \in \mathcal{H}_{n}$.
$$
\left|L_{\mathcal{D}}(h)-L_{S}(h)\right| \leq \sqrt{\frac{-\log (w(n))+\log (2^{2^n+1} / \delta)}{2 m}}
$$
Therefore, for every $\delta \in(0,1)$ and distribution $\mathcal{D},$ with probability of at least $1-\delta$ it holds that for all $h \in \mathcal{H}$
\begin{equation}
   \left|L_{\mathcal{D}}(h)-L_{S}(h)\right| \leq  \underset{h \in \mathcal{H}_n}{\operatorname{min}}\left[\sqrt{\frac{-\log (w(n))+\log (2^{2^n+1}  / \delta)}{2 m}}\right]
\end{equation}

\end{theorem}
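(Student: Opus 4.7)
The plan is to mimic the classical Structural Risk Minimization proof (see, e.g., inequalities \eqref{eq10}--\eqref{eq11}) but plug in the cardinality bound $|\mathcal{H}_n|\le 2^{2^n}$ that follows from the description-language construction of Section \ref{Section3}, and then take a union bound weighted by $w(n)$.

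First, I would fix $n\in\mathbb{N}$ and observe that, by the setup, $\mathcal{H}_n$ is the collection of hypotheses described by the first $n$ bits of $d(\cdot)$, so it corresponds to subsets of $\{0,1\}^n$ and hence $|\mathcal{H}_n|\le 2^{2^n}$. Apply Corollary \ref{hoe} to the finite class $\mathcal{H}_n$ with confidence parameter $\delta_n := w(n)\cdot \delta$: with probability at least $1-\delta_n$ over $S\sim \mathcal{D}^m$,
\begin{equation*}
\sup_{h\in\mathcal{H}_n}\bigl|L_\mathcal{D}(h)-L_S(h)\bigr|\;\le\;\sqrt{\frac{\log(2|\mathcal{H}_n|/\delta_n)}{2m}}\;\le\;\sqrt{\frac{\log(2^{2^n+1}/\delta)-\log(w(n))}{2m}},
\end{equation*}
where the last step uses $|\mathcal{H}_n|\le 2^{2^n}$ and the identity $\log(1/\delta_n)=\log(1/\delta)-\log(w(n))$. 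This yields the per-$n$ bound in the theorem statement.

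Next, I would apply a union bound across $n\in\mathbb{N}$. Let $B_n$ denote the bad event that the displayed inequality fails for some $h\in\mathcal{H}_n$; then $\Pr[B_n]\le\delta_n=w(n)\delta$, so
\begin{equation*}
\Pr\!\left[\bigcup_{n\in\mathbb{N}} B_n\right]\;\le\;\sum_{n=1}^{\infty} w(n)\,\delta\;\le\;\delta,
\end{equation*}
using the weighting assumption $\sum_n w(n)\le 1$. On the complementary event, the per-$n$ inequality holds simultaneously for every $n$ and every $h\in\mathcal{H}_n$, which is the first displayed conclusion.

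For the second display, given any $h\in\mathcal{H}$, note that $h\in\mathcal{H}_n$ for at least one $n$ (in fact for all sufficiently large $n$ once $h$ is describable), so on the same good event the inequality holds with the right-hand side taken at any such $n$; minimizing over $n$ (equivalently, taking the smallest $n$ with $h\in\mathcal{H}_n$) produces the stated $\min$-bound. There is no substantive obstacle here: the only nontrivial ingredients are (i) the cardinality estimate $|\mathcal{H}_n|\le 2^{2^n}$ inherited from the description language, and (ii) the weighted union bound, which is exactly what the summability condition on $w$ is designed for. The main thing to be careful about is using $w(n)\delta$ (not $w(n)$ alone) as the per-class confidence, so that the weights absorb cleanly after summation.
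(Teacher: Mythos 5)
Your proof is correct and follows essentially the same route as the paper's: plug the cardinality $|\mathcal{H}_n| \leq 2^{2^n}$ into the finite-class uniform convergence bound of Corollary \ref{hoe} with per-class confidence $w(n)\cdot\delta$, and then combine across $n$. The only difference is that you carry out the weighted union bound over $n$ explicitly, whereas the paper leaves that step implicit by invoking the standard SRM machinery of equations (\ref{eq10})--(\ref{eq11}).
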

\begin{proof}
Let $\mathcal{H}$ be a countable hypothesis class. Then, we can write $\mathcal{H}$ as a countable union of each individual class $\mathcal{H}_n$ , namely, $\mathcal{H}=\bigcup_{n \in \mathbb{N}}\mathcal{H}_{n}$. As the number of elements of a particular hypothesis class $\mathcal{H}_n$ is $2^{2^n}$ so by corollary (\ref{hoe}), each class has the uniform convergence property with rate 
$$
 m^{\mathrm{UC}}(\epsilon, \delta)=\frac{\log (2^{2^n+1} / \delta)}{2 \epsilon^{2}}    
$$
Therefore, the function $\epsilon_{n}$ given in Equation (\ref{eq11})
becomes
$$
 \epsilon_{n}(m, \delta)=\sqrt{\frac{\log (2^{2^n+1}/ \delta)}{2 m}}
$$
and the SRM rule becomes accroding to eqaution (\ref{eq10})
\begin{equation}\label{hoe1}
    \underset{h_{n} \in \mathcal{H}}{\operatorname{argmin}}\left[L_{S}(h)+\sqrt{\frac{-\log (w(n))+\log (2^{2^n+1} / \delta)}{2 m}}\right]
\end{equation}

Therefore, the bound becomes
$$
\left|L_{\mathcal{D}}(h)-L_{S}(h)\right| \leq  \underset{h \in \mathcal{H}_n}{\operatorname{min}}\left[\sqrt{\frac{-\log (w(n))+\log (2^{2^n+1}  / \delta)}{2 m}}\right]
$$
\end{proof}
With different weighting functions such as $w(n)=\frac{6}{\pi^2n^2}$ and $w(n)=2^{-n}$ equation (\ref{hoe1}) becomes simultaneously,
$$
  \underset{h_{n} \in \mathcal{H}}{\operatorname{argmin}}\left[L_{S}(h)+\sqrt{\frac{\log (\frac{\pi^2n^2}{6})+\log (2^{2^n+1} / \delta)}{2 m}}\right]
$$
and 
$$
\underset{h_{n} \in \mathcal{H}}{\operatorname{argmin}}\left[L_{S}(h)+\sqrt{\frac{\log (2^n)+\log (2^{2^n+1} / \delta)}{2 m}}\right]
$$
In particular, it suggests trading off empirical risk for saving the size of the class. This result suggests a learning paradigm for $\mathcal{H}$ that given a training set $S$ it searches for $h \in \mathcal{H}$ that minimizes the bound given in equation (\ref{hoe1}), as formalized in the following pseudo-code:\\~\\
\SetKwInput{KwData}{The data}
\SetKwInput{Kwinput}{The result}
\framebox{
\begin{algorithm}[H]
 \KwData{\begin{center}\begin{tabular}{ c c c }
 $\mathcal{H}$ is a countable hypothesis class \\  
 $\mathcal{H}_{all}$ is described by a language over $\{0,1\}^\mathbb{N}$ ( equation \ref{hoe2}) \\  
\end{tabular}
\end{center}}
 \Kwinput{ \textbf{Input:} We fix the range of $n$ ( maybe a fixed value or a range) \\ A training set $S \sim \mathcal{D}^m$, confidence $\delta$ \\
 \textbf{Output:$ h \in {\operatorname{argmin}}\left[L_{S}(h)+\sqrt{\frac{-\log (w(n))+\log (2^{2^n+1} / \delta)}{2 m}}\right]$}}
\caption{\\~\\ GML Algorithm}
\end{algorithm}
}

\subsection{Sample and Computational Complexity Bounds}
As it is mentioned in (Shawrtz and Ben David) that the gap between the sample complexity of a nonuniform learner $\mathcal{H}$ ad specific $\mathcal{H}_n$, where the VC dimension of each $\mathcal{H}_n$ is $n$, can be applied here as well
$$
m_{\mathcal{H}}^{\mathrm{NUL}}(\epsilon, \delta, h)-m_{\mathcal{H}_{n}}^{\mathrm{UC}}(\epsilon / 2, \delta) \leq 4 C \frac{2 \log (2 n)}{\epsilon^{2}}
$$
The above equation suggests that the cost of the relation of prior knowledge of $\mathcal{H}_n$ to the union of $\mathcal{H}_n$ that is $\mathcal{H}$ depends on the log of the index of the class in which $h$ resides. Moreover, cost increases with the index of the class, which can also be view as a good priority order among $\mathcal{H}_n$ on the hypothesis in $h$\par
For example, $\mathcal{H}$ can be the set of all predictors that can be implemented by a C++ program at most 10000 bits of code. the sample complexity has a dependence on the size of $\mathcal{H}$. i.e. $c(10000+log(c/\delta))/\epsilon^c$. but the number of hypotheses is $2^{10000}$. Also, if $\mathcal{H}_n$ is the set of functions that can be implemented by a C++ program written in at most $n$ bits of code, the run time grows exponentially with $n$, which also implies that the exhaustive search approach is not very practical. In fact for many cases, for individual $\mathcal{H}_n$ it is NP-hard to decide if there is a 2-term DNF that correctly classifies all examples in a training sample (Pitt and Valiant 1988). Two-layer linear threshold networks are also NP-hard (Blum and Rivest 1992) see also (Dasgupta et. al). Maximizing agreements with monotone monomials is NP-hard (Angluin and Laird 1988). A similar result for general monomials (Kearns and Li 1993), and also for half-spaces (Hoffgen and Simon 1995). Unless $P=NP$, but there is no algorithm whose running time is polynomial that is guaranteed to find an ERM (Empirical Risk Minimization) hypothesis for these problems (Ben-David et. al 2003). So the important thing to focus on is that for any of the hypothesis 
classes (which are NP-hard to learn) that we discuss there are input distributions that are efficiently learnable using some of the other hypothesis classes. This gives an important idea in model selection. As in this paper model selection has been dealt with information complexity. The choice of model can influence the computational complexity of finding a good hypothesis, that a poor choice can simultaneously influence the algorithm both by failing to explain the data and by making it hard to find a good hypothesis (Ben-David et. al 2003). Another implication to model selection is that when the learner sees the training data, it is advantageous to be able to the hypothesis class relative to which learning algorithm is proceeding. But the task of choosing a class to work with for a given data is in itself computationally difficult. (Ben-David et. al 2003). So to get around this hardness of learning we have to restrict the learning algorithm to search within a particular index or a range of index like in GML so limiting the hypothesis class is our only way.
\section{Discussions}
New paradigms such as deep belief networks (Bengio 2009) to the theoretical nature of universal kernels and universal priors (Hutter 2011) and MDL learning for universal coding (Schmidt and Lipson 2009). These paradigms offers a notion of universal learner that can be applied to any learning task without the need for prior knowledge or weaker inductive bias. According to No-free-lunch theorem leaner cannot be guaranteed to be able to learn every labeling rule over some domain,
unless it has access to a labeled training sample whose size is comparable to the size of the underlying domain. So, as per theorem \ref{non}, we can say that $\mathcal{H}$ can be learnable only if it is countable of specific $\mathcal{H}_n$ (finite VC dimension). For any description language, in this paper, the computability is not considered. If the learner is non-computable then that can be useless in practical application. Learnability by computable learners where the notion of CPAC learnability is introduced where there exist classes that are learnable by this setting but not learnable by the computable learners. (Agarwal et. al 2020) and as established by (Soloveichik 2008) no computable learner exists. Although, by relaxing the training size requirements we can comment on the computability of universal learners (Goldeich and Ron 1996).\par As we try to put the measure on only the classes which have finite VC dimension. Classes that do not follow the norm as given in Theorem \ref{non} have an undefined measure. Each singleton element represents a hypothesis whereas the rest of the elements might indicate other hypotheses or a special group of hypotheses. There might be a uniform measure between hypothesis or among the group of hypotheses that will be well suited but not discussed in this paper. For this current paper, our scope is limited to the mathematical aspects and the computational aspect has not been dealt with  in this paper. \par
In our discussions involving Occam's Razor, we will say different codes would have led to different description lengths, and thus, to different models. By simply changing the encoding method, we are able to make `complex' things `simple' and contrariwise. Also, whether or not the true data generating the mechanism is complex, it should be an honest strategy to prefer simple models for small sample sizes. We are now in a very position to administer one formalization of this informal claim: it's simply the very fact that GML with their already established preference for `simple' models with small parametric complexity, are typically statistically consistent achieving good rates of convergence ( definition \ref{d5}), whereas methods like maximum likelihood which do not take model complexity into consideration are typically in-consistent whenever they're applied to complex enough models like the set of polynomials of every degree or the set of Markov chains of all orders. This has implications for the standard of predictions: with complex enough models, irrespective of what percentage training data we observe, if we use the maximum likelihood estimation to predict future data from the identical source, the prediction error won't converge to the prediction error that might be obtained if the true distribution were known, consistency isn't the sole desirable property of a learning method, and it may be that in some particular settings, and under some particular performance measures, some alternatives will outperform GML. Yet it remains the case that every one methods that successfully deal with models of arbitrary complexity have a built-in preference for choosing simpler models at small sample sizes MDL, penalized minimum error estimators (Barron 1999) and also the Akaike criterion (Burnham and Anderson 2002) the result invariably being that during this way, good convergence properties may be obtained. While these approaches measure complexity in an different manner from GML and attach different relative weights on the information and complexity, the basic idea of finding a trade-off between error and complexity remains same.

\section{Conclusion and Future Work}
In this paper, we have initiated an investigation of analyzing the statistical notions of nonuniform learnability using measure theory. We have shown that following the pre-existing theorems of model selection we can still achieve a statistically consistent algorithm (GML). As we are limited by the computational aspect of the runtime aspect of the algorithm we try to restrict the hypotheses space. Although, making GML a union of computable learners and making the whole algorithm computable so that model selection can be feasible can be considered as future work. The GML is an explication to Occam’s Razor.
\\ 
\section{References}
[Vapnik and Chervonenkis 1974] Vapnik, V. N. and Chervonenkis, A. Y 1974, Theory of pattern recognition, Nauka,
Moscow. (In Russian).\newline\bigskip
[Vapnik 1995] Vapnik, V. .1995, The Nature of Statistical Learning Theory, Springer.\newline\bigskip
[Rissanen 1978]  Rissanen, J. 1978, `Modeling by shortest data description', Automatica 14, 465-471.\newline\bigskip
[Rissanen 1983]  Rissanen, J. 1983, `A universal prior for integers and estimation by minimum description
length', The Annals of Statistics 11(2), 416-431.\newline\bigskip
[Vovk and Gammerman 1999] V. Vovk and A. Gammerman. Complexity approximation principle. The Computer
Journal, 42(4):318–322, 1999.\newline\bigskip
[Blumer Ehrenfraut Hausslerand Warmuth 1987] Blumer, A., Ehrenfeucht, A., Haussler, D.  Warmuth, M. K. (1987), `Occam's razor',
Information Processing Letters 24(6), 377-380.\newline\bigskip
[Tikhonov 1943] Tikhonov, A. N. (1943), `On the stability of inverse problems', Dolk. Akad. Nauk SSSR
39(5), 195-198.\newline\bigskip
[Haussler 1992]  David Haussler. Decision theoretic generalizations of the PAC model for neural
net and other learning applications. Inf. Comput., 100(1):78-150, 1992. doi: 10.1016/0890-5401(92)90010-D.\newline\bigskip
[Shawrtz and Ben David]  Shwartz and Shai Ben-David. Understanding Machine Learning: From Theory to Algorithms. Cambridge University Press, 2014.
39(5), 195-198.\newline\bigskip
[Pitt and Valiant 1988]  L. Pitt, L.G. Valiant, Computational limitations on learning from examples, J. Assoc. Comput. Mach. 35 (4) (1988) 965–984.
39(5), 195-198.\newline\bigskip
[Blum and Rivest 1992]  A.L. Blum, R.L. Rivest, Training a 3-node neural network is NP-complete, Neural Networks 5 (1) (1992) 117–127.
39(5), 195-198.\newline\bigskip
[Dasgupta et. al]  B. DasGupta, H.T. Siegelmann, Eduardo D. Sontag, On the complexity of training neural networks with continuous activation functions, IEEE Trans. Neural Networks 6 (6) (1995) 1490–1504.\newline\bigskip
[Angluin and Laird 1988]  D. Angluin, P.D. Laird, Learning from noisy examples, Mach. Learning 2 (1988) 343–370.\newline\bigskip
[Kearns and Li 1993]  M. Kearns, M. Li, Learning in the presence of malicious errors, SIAM J. Comput. 22 (4) (1993) 807–837.\newline\bigskip
[Hoffgen and Simon 1995]  K.U. Hoffgen, H.U. Simon, K.S. Van Horn, Robust trainability of single neurons, J. Comput. System Sci. 50 (1)
(1995) 114–125.\newline\bigskip
[Ben-David et. al 2003]  Shai Ben-David, Nadav Eiron,and Philip M. Long,  Journal of Computer and System Sciences 66 (2003) 496–514\newline\bigskip
[Bengio 2009]  Yoshua Bengio. Learning deep architectures for ai. Foundations and Trends in Machine Learning,
2(1):1–127, 2009.\newline\bigskip
[Hutter 2011]  Marcus Hutter. Universal learning theory. CoRR, abs/1102.2467, 2011.\newline
[Schmidt and Lipson 2009] Michael Schmidt and Hod Lipson. Distilling free-form natural laws from experimental data.Science 3, 324 no. 5923 pp. 81-85, 2009.\newline\bigskip
[Agarwal et. al 2020]  2020 S. Agarwal, N. Ananthakrishnan, S. Ben-David, T. Lechner  R. Urner. Proceedings of Machine Learning Research vol 117:1–13,\newline\bigskip
[Soloveichik 2008]  David Soloveichik. Statistical learning of arbitrary computable classifiers. CoRR,abs/0806.3537, 2008.\newline\bigskip
[Goldeich and Ron 1996]  Oded Goldreich and Dana Ron. On universal learning algorithms. Online version, 1996.\newline\bigskip
[Barron 1999] Andrew Barron, Lucien Birgue, Pascal Massart  Risk bounds for model selection via penalization Probab. Theory Relat. Fields 113, 301–413 (1999)\newline\bigskip
[Burnham and Anderson 2002] Kenneth P. Burnham David R. Anderson
Model Selection and Multimodel Inference

\vskip 0.2in


\newpage

\appendix

\end{document}